\newtheorem{theorem}{Theorem}[section]
\newtheorem{proposition}[theorem]{Proposition}
\newtheorem{corollary}[theorem]{Corollary}
\theoremstyle{definition}
\newtheorem{definition}{Definition}
\newenvironment{assumption*}[1]{\assumption}{\endassumption}
\numberwithin{equation}{section}
\newenvironment{equations}{\equation\aligned}{\endaligned\endequation}
\newcommand*{\rom}[1]{\expandafter\@slowromancap\romannumeral #1@}
\def\iidsim{\stackrel{\textup{iid}}{\sim}}
\def\indsim{\stackrel{\textup{ind}}{\sim}}
\def\zero{\mathbf 0}
\def\ind{\mathbbm{1}}
\def\Ber{\texttt{\textup{Bernoulli}}}
\def\BETA{\texttt{\textup{Beta}}}
\def\IG{\texttt{\textup{IG}}}
\def\pois{\texttt{\textup{Poisson}}}
\def\N{\texttt{\textup{N}}}
\def\CIBP{\texttt{\textup{CIBP}}}
\def\IBP{\texttt{\textup{IBP}}}
\def\BP{\texttt{\textup{BP}}}
\def\BeP{\texttt{\textup{BeP}}}
\def\CRM{\texttt{\textup{CRM}}}
\def\d{\textup{d}}
\def\e{\textup{e}}
\def\E{\mathsf{E}}
\def\P{\mathsf{P}}
\def\R{\mathbb{R}}
\def\greekvectors#1{%
 \@for\next:=#1\do{%
    \def\X##1;{\expandafter\def\csname b##1\endcsname{\bm{\csname##1\endcsname}}}
    \expandafter\X\next;}
 \@for\next:=#1\do{%
    \def\X##1;{\expandafter\def\csname h##1\endcsname{\widehat{\csname##1\endcsname}}}
    \expandafter\X\next;}
 \@for\next:=#1\do{%
    \def\X##1;{\expandafter\def\csname c##1\endcsname{\check{\csname##1\endcsname}}}
    \expandafter\X\next;}
 \@for\next:=#1\do{%
    \def\X##1;{\expandafter\def\csname hb##1\endcsname{\widehat{\bm{\csname##1\endcsname}}}}
    \expandafter\X\next;}
}
    \def\command@factory#1{\expandafter\def\csname #1\endcsname{\mathbf{#1}} }
    \def\command@factory#1{\expandafter\def\csname b#1\endcsname{\mathbbm{#1}} }
    \def\command@factory#1{\expandafter\def\csname c#1\endcsname{\mathcal{#1}} }
    \def\command@factory#1{\expandafter\def\csname s#1\endcsname{\mathsf{#1}} }
    \def\command@factory#1{\expandafter\def\csname f#1\endcsname{\mathfrak{#1}} }
\def\u{\mathbf{u}}
\def\CIBP{\texttt{\textup{CIBP}}}
\def\ssCIBP{\mathsf{SSCIBP}}
\def\Knew{K^{\textup{new}}}
\begin{document}
\title{{The convergent Indian buffet process}}
\author{Ilsang Ohn}
\affil{\textit{Department of Statistics, Inha University}}
\maketitle

\begin{abstract}
We propose a new Bayesian nonparametric prior for latent feature models, which we call the convergent Indian buffet process (CIBP). We show that under the CIBP, the number of latent features is distributed as a Poisson distribution with the mean monotonically increasing but converging to a certain value as the number of objects goes to infinity. That is, the expected number of features is bounded above even when the number of objects goes to infinity, unlike the standard Indian buffet process under which the expected number of features increases with the number of objects. We provide two alternative representations of the CIBP based on a hierarchical distribution and a completely random measure, respectively, which are of independent interest. The proposed CIBP is assessed on a high-dimensional sparse factor model.

\null\noindent
\textbf{Keywords:} Indian buffet process, latent feature models, completely random measure, sparse factor models.
\end{abstract}

%%%%%%%%%%%%%%%%%%%%%%%%%%%%%%%%%%%%%%%%%%%%%%%%%%%%%%%%%%%%%%%%%%%%%%%%%%%%
\section{Introduction}

In this paper, we introduce a new three-parameter generalization of the Indian buffet process (IBP). The IBP, which is firstly introduced by \cite{griffiths2005infinite}, is an exchangeable distribution over binary matrices with a finite number of rows but an infinite number of columns. In the context of latent feature models, a binary matrix $\bXi:=(\xi_{jk})_{j\in\{1,\dots,p\}, k\in\bN}$ for $p\in\bN$ describes feature allocation for $p$ objects by letting $\xi_{jk}=1$  if the $j$-th object possesses the $k$-th feature and  $\xi_{jk}=0$ otherwise. The IBP and its two- and three-parameter generalizations \cite{thibaux2007hierarchical, teh2009indian} has been widely used in various applications \cite[e.g.,][]{meeds2007modeling, navarro2008latent, miller2009nonparametric, caron2012bayesian}.

It is well known that the expected number of features increases in a certain rate (logarithmic or polynomial) as the number of objects increases under both the one-, two-, and three-parameter IBPs \cite{griffiths2005infinite, thibaux2007hierarchical, teh2009indian}. Therefore, these IBPs, which can produce many unnecessary features, may not be suitable for modelling data sets that are believed to have a finite number of features. For example, in macroeconomic applications, fluctuations in data such as stock return can boil down to several important sources, so it is natural to assume that the number of features is fixed even if the data dimension increases \citep{onatski2010determining}.

In this paper, we propose a new stochastic process for latent feature models, under which the distribution of the number of features converges to a certain \textit{fixed}  distribution. Under the proposed process, fewer unnecessary features are generated than under the standard IBPs, and thus  both interpretability and prediction ability of the model can be improved.

\subsection{Convergent Indian buffet process}

Our proposed variant of the IBP, which we call the convergent Indian buffet process (CIBP) can be described by the following restaurant analogy. 

\begin{definition}[The restaurant analogy of the CIBP] Let $\gamma>0$, $\alpha>0$ and $\kappa\ge0$. We call the stochastic process  given below the restaurant analogy of $\CIBP(\gamma, \alpha, \kappa)$:
\begin{enumerate}
\item The first customer tries $\pois(\gamma\sB(\alpha+1,\kappa+1)/\sB(\alpha, \kappa+1))$ dishes, where $\sB(a, b)$ denotes the beta function with parameters $a$ and $b$.
\item For every $j = 2,\dots, p$, the $j$-th customer
	\begin{itemize}
	\item tries each previously tasted dish independently according to
	    \begin{equation}
	    \label{eq:prob_prev_dish}
	        \Ber\del{\frac{m_{j,k}+\alpha}{j+\kappa+\alpha}}
	    \end{equation}
	 where $m_{j,k}$ is the number of previous customers (before $j$-th customer) who have tried the $k$-th dish;
	\item and tries 
		  \begin{equation}
	    \label{eq:prob_new_dish}
	        \pois\del{\gamma \frac{\sB(\alpha+1,\kappa+j)}{\sB(\alpha, \kappa+1)}}
	    \end{equation}
	new dishes.
	\end{itemize}
\end{enumerate}
\label{def:restaurant}
\end{definition}

The restaurant analogy leads to the binary matrix with the number of rows being the number of costumers and the number of columns being unbounded, where the $(j,k)$-th element of the binary matrix is equal to 1 if the $j$-th customer tried the $k$-th dish and 0 otherwise. We denote by $\CIBP(\gamma, \alpha, \kappa)$ the distribution of the binary matrix induced by the above restaurant analogy. In this note, we discuss properties and construction of the CIBP.

\subsection{Organization}

The rest of the paper is organized as follows. In \cref{sec:properties}, we show that the number of features under the CIBP follows the Poisson distribution, with mean monotonically increasing but converging to a certain value as the number of objects $p$ goes to infinity. The name \textit{convergent} IBP is named after this property. We also describe connection between CIBP and the two-parameter IBP. In \cref{sec:construction}, we provides two alternative representations of the CIBP, where the first one is based on a hierarchical distribution of Poisson, Beta and Bernoulli distributions and the second one is based on random measures. In \cref{sec:factor}, as an application, we use the CIBP as the prior distribution on the factor loading matrix for Bayesian estimation of a sparse factor model. We provide a straightforward posterior computation algorithm and some numerical examples. In \cref{sec:proofs}, we give the proofs for the results of \cref{sec:construction}. \cref{sec:conclusion} concludes the paper.

\subsection{Notation} 
We denote by $\ind(\cdot)$ the indicator function. Let $\R$ be the set of real numbers and $\bR_+$ be the set of positive numbers.  Let $\bN$ be the set of natural numbers. For $m\in\bN$, we let $[m]:=\{1,\dots, m\}$. For noational convenience, we let $\bar{\sB}_{a_1,b_1}^{a_2, b_2}$ be the ratio of two beta functions defined as 
    \begin{equation*}
        \bar{\sB}_{a_1,b_1}^{a_2, b_2}:= \frac{\sB(a_1+a_2, b_1+b_2)}{\sB(a_1,b_1)}.
    \end{equation*}

\section{Properties}
\label{sec:properties}

\subsection{Distribution of the number of features}

In this section, we show that the number of features (i.e., dishes) under the CIBP follows a Poisson distribution with mean being fixed as the number of objects increases. Let $K^+$ be the number of nonzero columns of $\Xi$, which represents the number of features. Formally, we can define
    \begin{equation*}
        K^+:=K^+(\bXi):=\sum_{k=1}^\infty\ind(\bxi_{\bullet k}\neq\zero),%=\sum_{\u\in\Delta_1}K_\u
    \end{equation*}
where $\bxi_{\bullet k}$ denotes the $k$-th column of $\bXi$. The following proposition describes the distribution of $K^+$.

\begin{proposition}
If $\bXi\sim \CIBP(\gamma, \alpha, \kappa)$, then
    \begin{equation}
    \label{eq:dist_k}
        K^+\sim \pois\del{\gamma(1-\bar{\sB}_{\alpha, \kappa+1}^{0,p})},
    \end{equation}
where $\bar{\sB}_{\alpha, \kappa+1}^{0,p}:=\sB(\alpha,\kappa+p+1)/\sB(\alpha, \kappa+1)\le 1.$
Moreover, the Poisson mean $\gamma(1-\bar{\sB}_{\alpha, \kappa+1}^{0,p})$  monotonically increases and converges to $\gamma$ 
as $p\to\infty$, which, in particular, implies that $K^+$ converges to the random variable $K\sim \pois(\gamma)$ in distribution.
\end{proposition}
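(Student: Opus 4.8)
The plan is to compute the distribution of $K^+$ directly from the restaurant analogy by summing, over the $p$ customers, the independent Poisson counts of "new dishes" each customer introduces. A dish is a nonzero column of $\bXi$ precisely when it was introduced by some customer as a new dish, and—crucially—the restaurant analogy specifies that a newly introduced dish is automatically tasted by the customer who introduces it, so every new dish becomes a genuine feature regardless of whether later customers sample it. Hence $K^+ = \sum_{j=1}^p \Knew_j$, where $\Knew_1 \sim \pois(\gamma \bar{\sB}_{\alpha,1}^{1,\kappa})$ (rewriting the first customer's rate $\gamma\sB(\alpha+1,\kappa+1)/\sB(\alpha,\kappa+1)$ in the paper's notation) and, for $j \ge 2$, $\Knew_j \sim \pois(\gamma\, \sB(\alpha+1,\kappa+j)/\sB(\alpha,\kappa+1))$, and these counts are mutually independent by construction. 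Summing independent Poissons gives $K^+ \sim \pois(\Lambda_p)$ with
\begin{equation*}
  \Lambda_p = \frac{\gamma}{\sB(\alpha,\kappa+1)} \sum_{j=1}^p \sB(\alpha+1, \kappa+j).
\end{equation*}

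The core of the argument is then to evaluate the finite sum $\sum_{j=1}^p \sB(\alpha+1,\kappa+j)$ in closed form and show it equals $\sB(\alpha,\kappa+1) - \sB(\alpha,\kappa+p+1)$, which would yield $\Lambda_p = \gamma(1 - \bar{\sB}_{\alpha,\kappa+1}^{0,p})$ exactly as claimed. I would prove this by a telescoping identity: using $\sB(a,b) = \Gamma(a)\Gamma(b)/\Gamma(a+b)$ together with the recursion $\Gamma(b+1) = b\,\Gamma(b)$, one checks that $\sB(\alpha+1, \kappa+j) = \sB(\alpha, \kappa+j) - \sB(\alpha, \kappa+j+1)$ for each $j$ (equivalently, $\sB(a+1,b) = \sB(a,b) - \sB(a,b+1)$, which follows from $\sB(a,b) - \sB(a,b+1) = \sB(a,b)\bigl(1 - \tfrac{b}{a+b}\bigr) = \sB(a,b)\tfrac{a}{a+b} = \sB(a+1,b)$). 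Summing this over $j = 1, \dots, p$ telescopes to $\sB(\alpha,\kappa+1) - \sB(\alpha,\kappa+p+1)$, giving the claimed Poisson mean.

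For the monotonicity and convergence claims, note that $\Lambda_{p+1} - \Lambda_p = \gamma\sB(\alpha+1,\kappa+p+1)/\sB(\alpha,\kappa+1) > 0$, so $\Lambda_p$ is strictly increasing in $p$; and $\bar{\sB}_{\alpha,\kappa+1}^{0,p} = \sB(\alpha,\kappa+p+1)/\sB(\alpha,\kappa+1) = \Gamma(\kappa+p+1)\Gamma(\alpha+\kappa+1)/\bigl(\Gamma(\kappa+1)\Gamma(\alpha+\kappa+p+1)\bigr)$, which behaves like $(\kappa+p)^{-\alpha}$ as $p\to\infty$ by the standard asymptotic $\Gamma(x+a)/\Gamma(x+b) \sim x^{a-b}$, hence tends to $0$. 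Therefore $\Lambda_p \uparrow \gamma$. Convergence in distribution of $K^+$ to $K \sim \pois(\gamma)$ then follows immediately, e.g. by pointwise convergence of Poisson probability mass functions $\e^{-\Lambda_p}\Lambda_p^m/m! \to \e^{-\gamma}\gamma^m/m!$ or of characteristic functions $\exp(\Lambda_p(\e^{\iota t}-1)) \to \exp(\gamma(\e^{\iota t}-1))$.

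The only real obstacle is the bookkeeping in the first step: one must be careful that the "new dishes" a customer orders are counted as features even if no subsequent customer tries them, so that $K^+$ is literally the sum of the $\Knew_j$ with no further thinning—this is what makes the Poisson superposition clean. After that, the beta-function telescoping is the one genuine computation, and everything else is routine asymptotics.
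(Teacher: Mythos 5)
Your proposal is correct and follows essentially the same route as the paper: Poisson superposition of the independent new-dish counts, the telescoping identity $\sB(\alpha+1,\kappa+j)=\sB(\alpha,\kappa+j)-\sB(\alpha,\kappa+j+1)$ to get the closed-form mean, and then monotone convergence of the mean to $\gamma$ (the paper sends $\bar{\sB}_{\alpha,\kappa+1}^{0,p}\to 0$ via the product $\prod_{j=1}^p\frac{\kappa+j}{\alpha+\kappa+j}$, while you use the gamma-ratio asymptotic $(\kappa+p)^{-\alpha}$ -- both fine, and yours is arguably more explicit about why the limit is $0$). The only blemish is the notational slip $\bar{\sB}_{\alpha,1}^{1,\kappa}$ for the first customer's rate, which should be $\bar{\sB}_{\alpha,\kappa+1}^{1,0}$; the intended quantity $\gamma\sB(\alpha+1,\kappa+1)/\sB(\alpha,\kappa+1)$ is stated correctly alongside it, so this does not affect the argument.
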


\begin{proof}
From the restaurant analogy of the CIBP, we have that
    \begin{equation*}
        K^+ \stackrel{d}{=} \sum_{j=1}^P\Knew_j, \mbox{ where }\Knew_j\indsim \pois\del{\gamma \frac{\sB(\alpha+1,\kappa+j)}{\sB(\alpha, \kappa+1)}}
    \end{equation*}
Therefore, by the additive property of independent Poisson random variables, 
    \begin{equation*}
        K^+ \sim \pois\del{\frac{\gamma}{\sB(\alpha, \kappa+1)}\sum_{j=1}^p\sB(\alpha+1,\kappa+j) }
    \end{equation*}
From the identity $\sB(x,y)-\sB(x, y+1)=\sB(x+1,y)$, we have
    \begin{align*}
        \sum_{j=1}^p\sB(\alpha+1,\kappa+j)
        &=\sum_{j=1}^p\cbr{\sB(\alpha, \kappa+j)-\sB(\alpha,\kappa+j+1)} \\
        &=\sB(\alpha, \kappa+1)-\sB(\alpha,\kappa+p+1),
    \end{align*}
which implies \labelcref{eq:dist_k}. The fact that  $\bar{\sB}_{\alpha, \kappa+1}^{0,p}\le 1$ follows from that $\sB(\alpha, \kappa+1)-\sB(\alpha,\kappa+p+1)\ge0$.

For the second assertion, note that
    \begin{align*}
        \bar{\sB}_{\alpha, \kappa+1}^{0,p}
        =\frac{\Gamma(\alpha)\Gamma(\kappa+p+1)}{\Gamma(\alpha+\kappa+p+1)}\frac{\Gamma(\alpha+\kappa+1)}{\Gamma(\alpha)\Gamma(\kappa+1)}
        =\prod_{j=1}^p\frac{\kappa+j}{\alpha+\kappa+j},
    \end{align*}
where $\Gamma(\bullet)$ denotes the gamma function. Since $\alpha>0$, it follows that $\bar{\sB}_{\alpha, \kappa+1}^{0,p}\downarrow 0$ as $p\to\infty$.
\end{proof}

\subsection{Exchangeability}

Exchangeability of the IBP makes corresponding posterior computation algorithms tractable. The CIBP is an exchangeable distribution also, as shown in the following corollary. This is a direct consequence of \cref{prop:joint} and \cref{prop:urn} which are presented in the next section.

\begin{corollary}
Assume that a $p\times \infty$-dimensional binary matrix $\bXi$ follows $\CIBP(\gamma, \alpha, \kappa)$. Then the random vectors $\bxi_{1\bullet},\dots, \bxi_{p\bullet}$ are exchangeable, where $\bxi_{j\bullet}$ denotes the $j$-th row of the matrix $\bXi$.
\end{corollary}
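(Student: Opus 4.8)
The plan is to let \cref{prop:joint} — the closed form of the joint law of $\bXi$ — carry the argument, so that exchangeability reduces to a symmetry observation. First I would compute the probability of a realised matrix directly from the restaurant analogy of \cref{def:restaurant}: fixing the column labels by the order in which the dishes appear, this probability factorises over customers $j=1,\dots,p$ as the $\pois\del{\gamma\bar{\sB}_{\alpha,\kappa+1}^{1,j-1}}$ probability of the number $\Knew_j$ of dishes introduced by customer $j$, times the $\Ber$ factors of \labelcref{eq:prob_prev_dish} for each already-tasted dish. Passing to the equivalence class $[\bXi]$ of matrices differing from $\bXi$ by a column permutation introduces a multiplicity which, together with the $\prod_j(\Knew_j!)^{-1}$ from the Poisson pmfs, collapses to the single combinatorial factor $\big(\prod_h K_h!\big)^{-1}$, where $K_h$ is the number of columns of $\bXi$ equal to the nonzero pattern $h\in\{0,1\}^p$. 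For each column $k$ I would then telescope the numerators of its $\Ber$ factors via $\prod_{r=1}^{m-1}(r+\alpha)=\Gamma(m+\alpha)/\Gamma(1+\alpha)$ and gather the denominators $j+\kappa+\alpha$; the decisive point is that the factor $\gamma\bar{\sB}_{\alpha,\kappa+1}^{1,c_k-1}$ that column $k$ inherits from the Poisson of its introducing customer $c_k:=\min\{j:\xi_{jk}=1\}$ cancels exactly the part of that product that depends on $c_k$. Using in addition the identity $\sB(x,y)-\sB(x,y+1)=\sB(x+1,y)$ for the normaliser $\e^{-\sum_j\gamma\bar{\sB}_{\alpha,\kappa+1}^{1,j-1}}$, this leaves
\begin{equation*}
\P([\bXi])
= \e^{-\gamma\del{1-\bar{\sB}_{\alpha,\kappa+1}^{0,p}}}\,
\frac{1}{\prod_h K_h!}
\left(\frac{\gamma}{\sB(\alpha,\kappa+1)}\right)^{\!K^+}
\prod_{k=1}^{K^+}\frac{\Gamma(m_k+\alpha)\,\Gamma(p+\kappa+1-m_k)}{\Gamma(p+\kappa+\alpha+1)},
\qquad m_k:=\sum_{j=1}^p\xi_{jk},
\end{equation*}
which is precisely the statement of \cref{prop:joint}.

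Exchangeability then drops out. For any permutation $\sigma$ of $[p]$, relabelling the rows of $\bXi$ by $\sigma$ fixes $K^+$, fixes the multiset $\{m_k\}_{k=1}^{K^+}$ of column sums, and merely permutes the counts $\{K_h\}$ among themselves (since $h\mapsto\sigma\cdot h$ is a bijection of $\{0,1\}^p$); as the displayed formula depends on $\bXi$ only through these quantities, $\P([\sigma\bXi])=\P([\bXi])$, i.e., the law of the equivalence class is permutation invariant. The one remaining subtlety is that the rows $\bxi_{1\bullet},\dots,\bxi_{p\bullet}$ returned by the restaurant analogy carry a canonical column labelling, so a bare row permutation need not land in the support; this is exactly what \cref{prop:urn} settles, by generating $\bxi_{1\bullet},\dots,\bxi_{p\bullet}$ through a P\'olya-type urn whose predictive law for each new row depends on the earlier rows only through their column sums — hence the sequence is exchangeable in the de~Finetti sense. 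Combining \cref{prop:joint} with \cref{prop:urn} yields the corollary.

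The work is all in the first step. The two things to verify are (i) that the multiplicity of $[\bXi]$ meshes with the $\prod_j(\Knew_j!)^{-1}$ from the Poisson pmfs to give exactly $\big(\prod_h K_h!\big)^{-1}$, and (ii) the precise cancellation of the introduction time $c_k$ between $\bar{\sB}_{\alpha,\kappa+1}^{1,c_k-1}$ and the telescoped Bernoulli numerators and denominators of column $k$. It is (ii) that eliminates the only factor of $\P([\bXi])$ that is not manifestly symmetric in the rows — absent it, the tilting of the Poisson means responsible for the convergence property could a priori have broken exchangeability — and once \cref{prop:joint} is available nothing further is required.
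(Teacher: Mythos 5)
Your argument is correct and is essentially the paper's own: the corollary is obtained by combining \cref{prop:joint} and \cref{prop:urn} and observing that the lof-equivalence-class probability depends on $\bXi$ only through $K^+$, the multiset of column sums $\{m_k\}$, and the counts $\{K_\u\}$ (which a row permutation merely permutes), all of which are invariant under permuting rows. Your displayed formula agrees with \labelcref{eq:joint} after rewriting $\bar{\sB}_{\alpha,\kappa+1}^{m_k,p-m_k}$ in terms of gamma functions and applying \labelcref{eq:beta}, and your route to it (telescoping the restaurant analogy directly, i.e.\ the direction of \cref{prop:urn}) is the same cancellation the paper carries out by induction.
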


\subsection{Connection to the two-parameter IBP}

The  restaurant analogy  of the two-parameter IBP with parameters $\omega$ and $\kappa$ is as follows: The first customer tries $\pois(\omega)$ dishes. The $j$-th customer for $j\ge2$ tries each previously tasted dish independently according to $\mathsf{Bernoulli}(m_{j,k}/(j+\kappa))$ and tries $\pois(\omega\kappa/(j+\kappa))$ new dishes. We denote by  $\IBP(\omega, \kappa)$ the distribution induced by the above restaurant analogy.

By comparing the  restaurant analogies  of $\CIBP(\gamma, \alpha, \kappa)$ and $\IBP(\omega, \kappa)$, we then have the following proposition that connects these two stochastic processes.

\begin{proposition}
For two $p\times \infty$-dimensional binary matrices $\bXi\sim \CIBP(\gamma, \alpha, \kappa)$ and $\bXi_0\sim \IBP(\omega, \kappa)$, $\bXi$ converges to $\bXi_0$ in distribution as $\alpha\to0$ and $\gamma\alpha/\kappa \to \omega$. 
\end{proposition}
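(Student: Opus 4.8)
The plan is to prove the convergence directly from the two restaurant analogies, by running them in parallel on one probability space and coupling them so that the resulting feature allocations coincide with probability tending to one. Both $\CIBP(\gamma,\alpha,\kappa)$ and $\IBP(\omega,\kappa)$ are laws on the countable set of $p\times\infty$ binary matrices with finitely many nonzero columns, taken up to column permutation (the left-ordered form); on such a discrete space ``convergence in distribution'' means precisely convergence of the individual atom probabilities, and a coupling of the above kind --- which in fact produces convergence in total variation --- immediately yields that. I take $\kappa>0$ throughout, as the scaling $\gamma\alpha/\kappa\to\omega$ presupposes.

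Everything rests on two elementary limits. Write $\lambda_j:=\gamma\,\sB(\alpha+1,\kappa+j)/\sB(\alpha,\kappa+1)$ for the rate at which customer $j$ samples new dishes under the CIBP, cf.\ \eqref{eq:prob_new_dish}. Using $\Gamma(\alpha+1)=\alpha\Gamma(\alpha)$,
\[
  \lambda_j=\gamma\alpha\,\frac{\Gamma(\kappa+j)\,\Gamma(\alpha+\kappa+1)}{\Gamma(\kappa+1)\,\Gamma(\alpha+\kappa+j+1)}\;\longrightarrow\;\frac{\omega\kappa}{\kappa+j}\qquad\text{as }\alpha\to0\text{ and }\gamma\alpha/\kappa\to\omega,
\]
which is exactly the new-dish rate of customer $j$ under $\IBP(\omega,\kappa)$ (for $j=1$ this reduces to $\lambda_1=\gamma\alpha/(\alpha+\kappa+1)\to\omega\kappa/(\kappa+1)$). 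In the same limit, for every $j\ge2$, every already-present dish, and every value $m$ of its occupancy count $m_{j,k}$, the CIBP probability $(m+\alpha)/(j+\kappa+\alpha)$ of trying that dish, cf.\ \eqref{eq:prob_prev_dish}, converges to the IBP probability $m/(j+\kappa)$, with the quantitative bound $\bigl|(m+\alpha)/(j+\kappa+\alpha)-m/(j+\kappa)\bigr|=\alpha(j+\kappa-m)/\{(j+\kappa+\alpha)(j+\kappa)\}\le\alpha/(j+\kappa)\le\alpha/\kappa$.

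Now process customers $j=1,\dots,p$ in turn, building the two matrices simultaneously. At customer $j$, couple the CIBP count $\pois(\lambda_j)$ of new dishes with the IBP count $\pois(\omega\kappa/(\kappa+j))$ by a maximal coupling, so that they disagree with probability $d_{\mathrm{TV}}(\pois(\lambda_j),\pois(\omega\kappa/(\kappa+j)))\le|\lambda_j-\omega\kappa/(\kappa+j)|$, which tends to $0$; and, for each dish present before customer $j$ (there are none when $j=1$), couple the two Bernoulli trials through one shared uniform variable, so that they differ only if that uniform lands in an interval of length at most $\alpha/\kappa$. On the event that all of these draws agree, at every customer and every dish, the two processes produce the same matrix. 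Writing $D_{j-1}:=\sum_{i<j}\Knew_i$ for the number of dishes present before customer $j$ --- a finite sum of independent Poissons with $\E[D_{j-1}]=\sum_{i<j}\lambda_i$, bounded uniformly in small $\alpha$ --- a union bound over the finitely many coupling events (customer $j$'s one Poisson draw and its $D_{j-1}$ Bernoulli draws, conditioned on agreement through customer $j-1$) gives
\[
  \P(\bXi\neq\bXi_0)\;\le\;\sum_{j=1}^{p}\Bigl|\lambda_j-\frac{\omega\kappa}{\kappa+j}\Bigr|\;+\;\frac{\alpha}{\kappa}\sum_{j=2}^{p}\E[D_{j-1}],
\]
and both terms vanish in the stated limit. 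Hence $\P(\bXi=\bXi_0)\to1$, which gives the claimed convergence in distribution.

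The only genuinely delicate step is this last piece of bookkeeping: the number of columns is random, so one must check that the union bound stays summable, and this is exactly what the uniform boundedness of the Poisson means $\lambda_j$ provides (together with $d_{\mathrm{TV}}(\pois(a),\pois(b))\le|a-b|$ for the Poisson couplings); the remainder is the routine Gamma-function algebra behind the limits above. A variant that dispenses with the coupling is to write, for each of the two processes, the probability mass function of the left-ordered form as a finite product --- one $\pois$ factor per customer, one $\Ber$ factor per pair $(j,k)$ with $k$ already present, times a combinatorial constant not depending on $\alpha$ or $\gamma$ --- and pass to the limit factor by factor; pointwise convergence of these mass functions yields the same conclusion.
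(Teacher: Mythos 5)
Your proof is correct, and at its core it rests on the same computation as the paper's: the Gamma-function algebra showing that the new-dish Poisson rate $\gamma\,\sB(\alpha+1,\kappa+j)/\sB(\alpha,\kappa+1)$ tends to $\omega\kappa/(j+\kappa)$ and that the old-dish Bernoulli probability $(m_{j,k}+\alpha)/(j+\kappa+\alpha)$ tends to $m_{j,k}/(j+\kappa)$. The difference is that the paper stops there, asserting that convergence of these parameters ``suffices,'' whereas you actually supply the missing step: a sequential maximal coupling of the two restaurant processes, with the total-variation bound $d_{\mathrm{TV}}(\pois(a),\pois(b))\le|a-b|$ for the new-dish counts and a shared-uniform coupling for the Bernoulli draws, together with the observation that the expected number of occupied columns stays bounded as $\alpha\to0$ (even though $\gamma\to\infty$) so the union bound closes. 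That bookkeeping is exactly what is needed to turn parameter convergence into convergence in distribution on the (countable, discrete) space of left-ordered matrices, and it even yields the stronger total-variation statement; your factor-by-factor pmf alternative is essentially the paper's implicit intent made explicit. One shared wrinkle worth flagging: under the paper's stated restaurant analogy for $\IBP(\omega,\kappa)$ the \emph{first} customer draws $\pois(\omega)$ dishes, while your limit (and the paper's own computation, evaluated at $j=1$) gives $\lambda_1\to\omega\kappa/(\kappa+1)\neq\omega$; this is an inconsistency in the paper's description of the two-parameter IBP (the general-$j$ formula there does not specialize correctly to $j=1$) rather than an error in your argument, but as written the $j=1$ factor of your coupling matches the formula $\omega\kappa/(j+\kappa)$, not the stated $\pois(\omega)$, and the discrepancy should be acknowledged or the IBP definition corrected. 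You are also right to require $\kappa>0$; the scaling $\gamma\alpha/\kappa\to\omega$ is vacuous otherwise.
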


\begin{proof}
It suffices to show that the means of the Bernoulli distribution in \labelcref{eq:prob_prev_dish} and the Poisson distribution in  \labelcref{eq:prob_new_dish} converge to the corresponding quantities for $\IBP(\omega, \kappa)$, which can be derived as:
    \begin{equation*}
        \frac{m_{j,k}+\alpha}{j+\kappa+\alpha}\to  \frac{m_{j,k}}{j+\kappa}
    \end{equation*}
as $\alpha\to\zero$ and
    \begin{align*}
        \gamma \frac{\sB(\alpha+1,\kappa+j)}{\sB(\alpha, \kappa+1)}
         &=\gamma \frac{\Gamma(\alpha+1)\Gamma(\kappa+j)}{\Gamma(\alpha+\kappa+j+1)}\frac{\Gamma(\alpha+\kappa+1)}{\Gamma(\alpha)\Gamma(\kappa+1)} \\
         &= \gamma\frac{\alpha}{\alpha+\kappa+j} \prod_{h=1}^j\frac{\kappa+h}{\alpha+\kappa+h}\\
         &\to \frac{\omega \kappa }{j+\kappa},
    \end{align*}
as $\alpha\to0$  and $\gamma\alpha/\kappa \to \omega$.
\end{proof}

We visualize the result of the above propostion. \cref{fig:CIBP_ibp} shows four binary matrices generated by  $\CIBP(\omega\kappa/\alpha, \alpha, \kappa)$ and $\IBP(\omega,\kappa)$ with $\omega=5$, $\kappa=4$ but with $\alpha=5$, $\alpha=1$ and $\alpha=0.5$. We can see that the IBP tends to generate more features than the CIBP.

\begin{figure}
    \centering
    \includegraphics[scale=0.45]{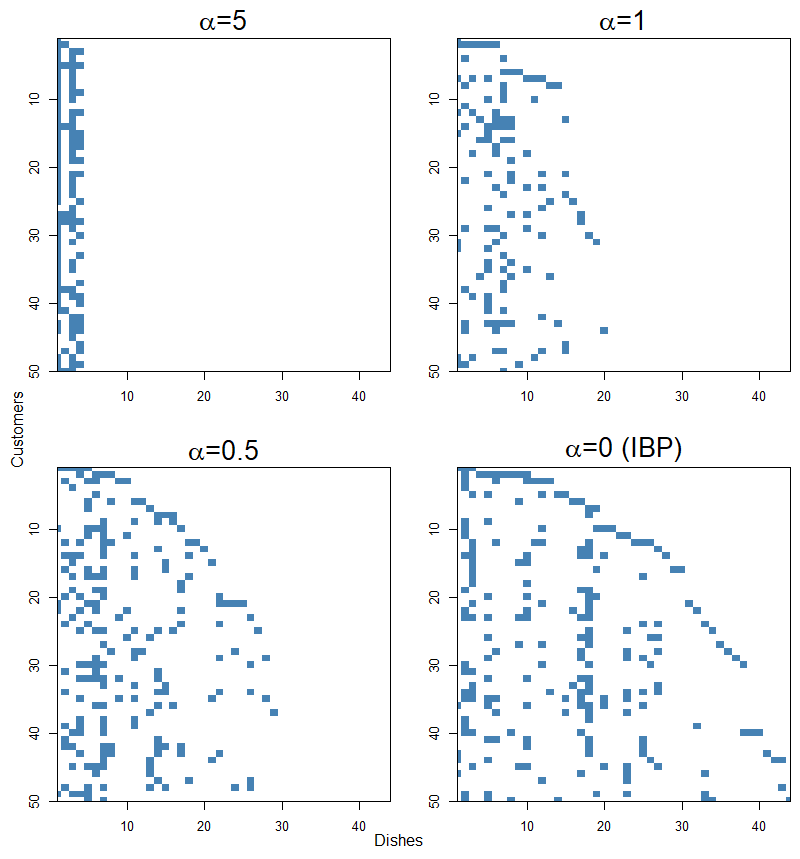}
    \caption{Draws from $\CIBP(\omega\kappa/\alpha, \alpha, \kappa)$ and $\IBP(\omega,\kappa)$ with $\omega=5$, $\kappa=4$ but with $\alpha=5$, $\alpha=1$ and $\alpha=0.5$.}
    \label{fig:CIBP_ibp}
\end{figure}

\section{Alternative representations}
\label{sec:construction}

In this section, we provides two alternative representations of the CIBP. The first one is based on a hierarchical distribution of Poisson, Beta and Bernoulli distributions and the second one is based on random measures. The proofs of all the results in this section are deferred to \cref{sec:proofs}.

\subsection{Hierarchical representation}

In this section we show that the CIBP is equivalent to the following hierarchical distribution.

\begin{definition}[Hierarchical representation of the CIBP] Let $\gamma>0$, $\alpha>0$ and $\kappa\ge0$. We call the  probability distribution given below the hierarchical representation of $\CIBP(\gamma, \alpha, \kappa)$:
\label{def:hierarchical}
	\begin{equations}
	\label{eq:hierarchical}
	K &\sim \pois(\gamma), \\
	\theta_k &\iidsim \BETA\del{\alpha,\kappa + 1},  \: k\in[K] \\
	\xi_{jk}|\theta_k &\indsim \Ber(\theta_k)\: j\in[p], k\in[K].
	\end{equations}

\end{definition}

To state the result rigorously, we need a concept of \textit{lof-equivalence} classes. Under the latent feature model, the ordering of the features does not affect the likelihood of the data. Hence, we say that two $p\times\infty$ dimensional binary matrices are equivalent if they are identical up to a permutation of columns. It is convenient to choose a representative  of every equivalence class by the \textit{left-ordering procedure}. The left-ordering procedure maps each $p\times\infty$ dimensional binary matrix to its left-ordered version whose columns are ordered by the score $s_k$, which is defined by
    \begin{equation*}
        s_k:=\sum_{j=1}^p\xi_{jk}2^{p-j}
    \end{equation*}
i.e., the colums are ordered so that $s_1\ge s_2\ge\cdots.$ We call the equivalence class defined by the left-ordering procedure \textit{lof-equivalence} class and we denote the lof-equivalence class of a binary matrix $\bXi$ by $[\bXi]$.

We introduce useful notations. Let $\Delta:=\{0,1\}^p$ which is a set of $p$-dimensional binary vectors and  $\Delta_{1}:=\Delta\setminus\{\zero\}$ where $\zero$ is the vector or zero.
For each $\u\in\Delta_1$, we define
    \begin{align}
        K_{\u} &:=\sum_{k=1}^\infty\ind(\bxi_{\bullet k}=\u),
    \end{align}
where $\bxi_{\bullet k}$ denotes the $k$-th column of $\bXi$. In words, $K_{\u}$ is the number of columns equal to the binary vector $\u$. Note that $K^+:=\sum_{k=1}^\infty\ind(\bxi_{\bullet k}\neq\zero)=\sum_{\u\in\Delta_1}K_\u$. Moreover, let
    \begin{equation*}
        m_k:=\sum_{j=1}^p\xi_{jk},
    \end{equation*}
be the number of rows that have the $k$-th feature. 

In the next proposition, we provide the explicit form of the probability mass function of the lof-equivalence class $[\bXi]$.

\begin{proposition}
\label{prop:joint}
If a $p\times \infty$-dimensional random binary  matrix $\Xi\equiv(\xi_{jk})_{j\in[p], k\in\bN}$ follows the distribution in  \labelcref{eq:hierarchical}, then
	\begin{equations}
	\P([\Xi]) =\frac{\gamma^{K^+}}{\prod_{\u\in\Delta_1}K_\u!}\e^{-\gamma \sum_{j=1}^p\bar{\sB}_{\alpha,\kappa+1}^{1,j-1}}\sbr{\prod_{k=1}^{K^+}\bar{\sB}_{\alpha,\kappa+1}^{m_{k},p-m_{k}} }.
	\label{eq:joint}
	\end{equations}
\end{proposition}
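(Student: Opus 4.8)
The plan is to compute $\P([\bXi])$ directly from the hierarchical model \labelcref{eq:hierarchical} by first marginalizing out the $\theta_k$'s, then summing over the (countably many) unordered binary matrices that map to a given left-ordered representative. First I would condition on $K$ and on the columns $\bxi_{\bullet 1},\dots,\bxi_{\bullet K}$: since each $\xi_{jk}\mid\theta_k\indsim\Ber(\theta_k)$ and $\theta_k\iidsim\BETA(\alpha,\kappa+1)$, integrating out $\theta_k$ gives a Beta–Bernoulli (Pólya-urn) factor
\begin{equation*}
\int_0^1 \theta_k^{m_k}(1-\theta_k)^{p-m_k}\,\frac{\theta_k^{\alpha-1}(1-\theta_k)^{\kappa}}{\sB(\alpha,\kappa+1)}\,\d\theta_k
=\frac{\sB(\alpha+m_k,\kappa+1+p-m_k)}{\sB(\alpha,\kappa+1)}
=\bar{\sB}_{\alpha,\kappa+1}^{m_k,\,p-m_k}
\end{equation*}
for each column, so that $\P(\bxi_{\bullet 1},\dots,\bxi_{\bullet K}\mid K)=\prod_{k=1}^K \bar{\sB}_{\alpha,\kappa+1}^{m_k,p-m_k}$, where this probability is over the specific ordered $K$-tuple of columns. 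Columns that are identically $\zero$ contribute a factor $\bar{\sB}_{\alpha,\kappa+1}^{0,p}$ each and do not change the lof-class, so I will need to account for them when summing over $K$.

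Next I would pass from ordered tuples of columns to the lof-equivalence class. Fix a left-ordered matrix with $K^+$ nonzero columns having multiplicities $(K_\u)_{\u\in\Delta_1}$ among the distinct nonzero patterns; write $K = K^+ + K_\zero$ where $K_\zero$ is the number of all-zero columns. Given $K$, the number of ordered $K$-tuples of columns whose multiset equals the prescribed one (with $K_\zero$ extra zero columns) is the multinomial coefficient $\binom{K}{K_\zero,(K_\u)_{\u\in\Delta_1}} = K!/(K_\zero!\prod_{\u\in\Delta_1}K_\u!)$, and each such tuple carries the same Beta–Bernoulli weight $\bigl(\bar{\sB}_{\alpha,\kappa+1}^{0,p}\bigr)^{K_\zero}\prod_{k=1}^{K^+}\bar{\sB}_{\alpha,\kappa+1}^{m_k,p-m_k}$. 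Multiplying by the Poisson weight $\P(K)=\e^{-\gamma}\gamma^{K}/K!$ and summing over $K_\zero=0,1,2,\dots$, the $K!$ cancels and the sum over $K_\zero$ is
\begin{equation*}
\sum_{K_\zero=0}^\infty \frac{\bigl(\gamma\,\bar{\sB}_{\alpha,\kappa+1}^{0,p}\bigr)^{K_\zero}}{K_\zero!}
=\exp\bigl(\gamma\,\bar{\sB}_{\alpha,\kappa+1}^{0,p}\bigr),
\end{equation*}
which leaves
\begin{equation*}
\P([\bXi])=\frac{\gamma^{K^+}}{\prod_{\u\in\Delta_1}K_\u!}\,\e^{-\gamma}\,\e^{\gamma\bar{\sB}_{\alpha,\kappa+1}^{0,p}}\prod_{k=1}^{K^+}\bar{\sB}_{\alpha,\kappa+1}^{m_k,p-m_k}.
\end{equation*}

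Finally I would reconcile the exponent $-\gamma+\gamma\bar{\sB}_{\alpha,\kappa+1}^{0,p} = -\gamma\bigl(1-\bar{\sB}_{\alpha,\kappa+1}^{0,p}\bigr)$ with the claimed form $-\gamma\sum_{j=1}^p\bar{\sB}_{\alpha,\kappa+1}^{1,j-1}$; this is exactly the telescoping identity already used in the proof of the first proposition, namely $\sum_{j=1}^p\sB(\alpha+1,\kappa+j)=\sB(\alpha,\kappa+1)-\sB(\alpha,\kappa+p+1)$, so dividing by $\sB(\alpha,\kappa+1)$ gives $\sum_{j=1}^p\bar{\sB}_{\alpha,\kappa+1}^{1,j-1}=1-\bar{\sB}_{\alpha,\kappa+1}^{0,p}$, which matches. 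I expect the main obstacle to be bookkeeping rather than any substantive difficulty: one must be careful that $\P([\bXi])$ is the probability of a whole equivalence class (hence the multinomial count of ordered representatives), that infinitely many all-zero columns must be summed over and that this is what converts $\e^{-\gamma}$ into the finite-$p$ normalizing constant, and that the indexing $\bar{\sB}_{\alpha,\kappa+1}^{m_k,p-m_k}$ correctly reflects $m_k$ successes and $p-m_k$ failures in the Beta–Bernoulli integral. An alternative, essentially equivalent route—if one prefers to avoid the explicit zero-column sum—is to invoke the Poisson thinning/coloring theorem: the $K\sim\pois(\gamma)$ columns, each independently assigned pattern $\u\in\Delta$ with probability $\bar{\sB}_{\alpha,\kappa+1}^{m(\u),p-m(\u)}$ (these sum to $1$ by the same telescoping identity, where $m(\u)$ is the number of ones in $\u$), split into independent Poisson counts $K_\u\sim\pois\bigl(\gamma\,\bar{\sB}_{\alpha,\kappa+1}^{m(\u),p-m(\u)}\bigr)$; discarding $\u=\zero$ and writing the joint pmf of $(K_\u)_{\u\in\Delta_1}$ then yields \labelcref{eq:joint} after the same exponent simplification.
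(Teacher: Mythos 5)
Your proposal is correct and follows essentially the same route as the paper's proof: integrate out the Beta variables to get the factors $\bar{\sB}_{\alpha,\kappa+1}^{m_k,p-m_k}$, multiply by the count $K!/(K_\zero!\prod_{\u\in\Delta_1}K_\u!)$ of ordered representatives of the lof-class, marginalize the $\pois(\gamma)$ number of columns so that the zero-column sum produces $\e^{-\gamma(1-\bar{\sB}_{\alpha,\kappa+1}^{0,p})}$, and finish with the telescoping beta identity. The Poisson-thinning alternative you sketch at the end is a clean equivalent reformulation but is not needed.
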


From \cref{prop:joint}, we can show that  the restaurant analogy and the hierarchical representation of the CIBP are equivalent.

\begin{proposition}
\label{prop:urn}
Suppose that a $p\times\infty$-dimensional binary matrix $\bXi$ follow the hierarchical distribution presented in \labelcref{eq:hierarchical}. Then the lof-equivalence class  $[\Xi]$ follows $\CIBP(\gamma, \alpha, \kappa)$.
\end{proposition}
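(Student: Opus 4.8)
The plan is to show that the restaurant analogy and the hierarchical representation induce the same distribution on lof-equivalence classes, by computing the probability of a given lof-equivalence class under the restaurant analogy and matching it with the formula \labelcref{eq:joint} established in \cref{prop:joint}. Since \cref{prop:joint} already gives $\P([\Xi])$ for the hierarchical representation in closed form, it suffices to verify that the sequential (restaurant) construction assigns the same mass to each lof-equivalence class; equivalently, one can integrate/sum the restaurant transition probabilities over all matrices in a fixed lof-equivalence class.

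First I would set up the combinatorics of the restaurant process. Following the standard IBP argument of \cite{griffiths2005infinite}, fix a binary matrix $\bXi$ (not yet left-ordered) and compute the probability that the restaurant analogy of $\CIBP(\gamma,\alpha,\kappa)$ produces exactly $\bXi$ with the dishes appearing in a particular order. The "new dishes" contributions multiply to $\prod_{j=1}^p \pois(\gamma\,\bar{\sB}_{\alpha,\kappa+1}^{1,j-1}; \Knew_j)$, whose product of the exponential normalizers yields the factor $\e^{-\gamma\sum_{j=1}^p\bar{\sB}_{\alpha,\kappa+1}^{1,j-1}}$ and whose Poisson-rate powers combine (after telescoping as in the proof of the earlier proposition on the distribution of $K^+$) into $\gamma^{K^+}$ times a product over columns of $\Knew_j!$ in the denominator. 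The "previously tasted dish" Bernoulli factors in \labelcref{eq:prob_prev_dish}, collected column by column, give for the $k$-th dish (first tried by customer $j_k$) a product $\prod$ over subsequent customers of $(m+\alpha)/(i+\kappa+\alpha)$ or $(i-m+\kappa)/(i+\kappa+\alpha)$ according to whether the customer tries it; this telescopes to a ratio of Gamma functions that is exactly $\bar{\sB}_{\alpha,\kappa+1}^{m_k,p-m_k}$ up to a factor absorbing the first-appearance index. The key identity $\sB(x,y)-\sB(x,y+1)=\sB(x+1,y)$ used already in the excerpt is what makes the per-column "new dish" rate at customer $j$ and the Bernoulli product for a dish introduced at customer $j$ fit together.

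Then I would pass to lof-equivalence classes: summing over the $\binom{K^+}{(K_\u)_{\u\in\Delta_1}}\prod_{\u}K_\u!$ column-orderings that collapse to a single left-ordered matrix converts $\prod_j \Knew_j!$ in the denominator into $\prod_{\u\in\Delta_1}K_\u!$, exactly as in Griffiths–Ghahramani. This should reproduce \labelcref{eq:joint} verbatim, and since two distributions on lof-equivalence classes with the same pmf coincide, \cref{prop:urn} follows. The main obstacle is the bookkeeping in the telescoping of the Bernoulli product for a single dish: one has to correctly track the dependence on the customer index $j_k$ at which the dish is first served and confirm that, after multiplying by the matching factor coming from the "new dish" Poisson rate $\gamma\,\bar{\sB}_{\alpha,\kappa+1}^{1,j_k-1}$ for that customer, the $j_k$-dependence cancels and leaves the clean symmetric factor $\bar{\sB}_{\alpha,\kappa+1}^{m_k,p-m_k}$ — this cancellation is the crux and is exactly where the beta-function identity is invoked; everything else is routine. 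Alternatively, since \cref{prop:joint} is already proved, one could instead verify \cref{prop:urn} indirectly by checking that the hierarchical representation's predictive rule — the conditional law of row $\bxi_{j\bullet}$ given $\bxi_{1\bullet},\dots,\bxi_{j-1\bullet}$ after marginalizing out $K$ and the $\theta_k$ — reduces to \labelcref{eq:prob_prev_dish} and \labelcref{eq:prob_new_dish}; this uses Poisson thinning for the never-seen dishes and the Beta–Bernoulli conjugacy $\theta_k\mid (\text{data}) \sim \BETA(\alpha+m,\kappa+1+(j-1-m))$ for the seen ones, and may in fact be the shorter route.
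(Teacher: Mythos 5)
Your proposal is correct and follows essentially the same route as the paper: compute the probability that the restaurant analogy generates a given row sequence, use the identity $\sB(x,y)-\sB(x,y+1)=\sB(x+1,y)$ to merge each dish's Bernoulli product with the Poisson rate at its first appearance into the symmetric factor $\bar{\sB}_{\alpha,\kappa+1}^{m_k,p-m_k}$, and then multiply by the lof-multiplicity $\prod_j \Knew_j!/\prod_{\u\in\Delta_1}K_\u!$ to match \labelcref{eq:joint}. The only cosmetic difference is that the paper organizes the telescoping as an induction on the customer index $p$ rather than as a direct column-by-column product, and the alternative predictive-rule argument you mention at the end is not the route the paper takes.
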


\subsection{Random measure representation}

In this section, we provide another representation of the CIBP, which is based on random measures. 

We first briefly review completely random measures. Let $(\Omega, \cA)$ a Polish space with its Borel $\sigma$-field and let $(\fM, \cM)$ be a set of all measures on  $(\Omega, \cA)$ with its Borel $\sigma$-field. A completely random measure  (CRM) $\mu$ on $(\Omega, \cA)$ is a random measure such that $\mu(A_1),\dots, \mu(A_k)$ for all disjoint measurable sets $A_1,\dots, A_k\in \cA$ are mutually independent. Every CRM can be decomposed into three independent parts: 
    \begin{align*}
        \mu = \mu_0+\sum_{k=1}^Kq_k\delta_{\omega_k}+\sum_{( q,\omega )\in \Phi}q \delta_\omega   
    \end{align*}
where $\mu_0$ is a non-random measure, $(\omega_k)_{k\in[K]}$ are fixed atoms in $\Omega$, $(q_k)_{k\in[K]}$ are independent random variables on $\R_+$ and $\Phi$ is a Poisson process on $\R_+\times \Omega$. Here we only consider purely-atomic CRMs such that $\mu_0=0$. We write
$$\mu \sim \CRM\del{\Lambda, (\omega_k, P_k)_{k\in[K]}}$$
if $\mu$ is the purely-atomic CRM represented by $\mu = \sum_{k=1}^Kq_k\delta_{\omega_k}+\sum_{(q, \omega)\in \Phi}q\delta_\omega$ with $q_k\indsim P_k$ for $k\in[K]$ and $\E \Phi=\Lambda$ for some probability measures $(P_k)_{k\in [K]}$ on $\R_+$ and $\Lambda$ on $\R_+\times \Omega$. In particular, we write $\mu\sim \CRM\del{\Lambda}$ if $\mu=\sum_{(q, \omega)\in \Phi}q \delta_\omega$ with $\E \Phi=\Lambda$.

It is well known that the two-parameter IBP, $\IBP(\alpha, \kappa+1)$ with $\alpha>0$ and $\kappa\ge0$, has the following random measure representation:
	\begin{align*}
	\bxi_{j\bullet}|\mu &\iidsim\BeP(\mu), \:j\in[p]\\
	\mu &\sim	 \BP(\kappa+1, \alpha\Lambda_0)
	\end{align*}
for some smooth probability measure $\Lambda_0$, i.e., $\Lambda_0(\Omega)=1$. Here, $\BeP(\mu)$ denotes the Bernoulli process with mean $\mu$, which is equivalent to $\CRM(\Lambda_{\BeP(\mu)})$ on  $(\Omega, \cA)$ with
    \begin{align*}
        \Lambda_{\BeP(\mu)}(\d q, \d\omega)=\delta_1(\d q)\mu(\d\omega),
    \end{align*}
where $\delta_1$ denotes a point mass at 1, and $\BP(\kappa+1,\alpha\Lambda_0)$ denotes the Beta process which is equivalent to $\CRM(\Lambda_{\textup{BP}(\theta, \gamma\Lambda_0)})$ on  $(\Omega, \cA)$  with 
    \begin{align*}
        \Lambda_{\BP(\kappa+1,\alpha\Lambda_0)}(\d q, \d\omega)
        =\alpha(\kappa+1) q^{-1}(1-q)^{\kappa}\d q\Lambda_0(\d\omega).
    \end{align*}
    
We introduce another stochastic process represented by a random measure, which will be shown to be related to the CIBP.

\begin{definition}[Random measure representation of the CIBP]
\label{def:crm}
Let $\gamma>0$, $\alpha>0$ and $\kappa\ge0$. We call the  stochastic process given below the random measure representation of $\CIBP(\gamma, \alpha, \kappa)$: 
	\begin{equations}
	\label{eq:rm}
	\bxi_{j\bullet}|\mu &\iidsim\BeP(\mu), \:j\in[p]\\
	\mu &\sim\CRM(\Lambda_{\CIBP(\gamma, \alpha, \kappa)})
	\end{equations}
with
    \begin{equation}
        \Lambda_{\CIBP(\gamma, \alpha, \kappa)}(\d q, \d \omega)=\frac{\gamma}{\sB(\alpha,\kappa+1)} q^{\alpha-1}(1-q)^\kappa \d q\Lambda_0(\d\omega)
    \end{equation}
for some smooth  probability measure $\Lambda_0$. 
\end{definition}

The next theorem shows that the hierarchical representation in \cref{def:hierarchical} and random measure representation in \cref{def:crm} of the CIBP  are equivalent.

\begin{proposition} 
\label{prop:rm}
Let $\bxi_{1\bullet}, \dots, \bxi_{p\bullet}$ be random measures following the distribution given in \labelcref{eq:rm}. Then the joint distribution of $\bxi_{1\bullet}, \dots, \bxi_{p\bullet}$ is given by
	\begin{equations}
	\P(\bxi_{1\bullet}, \dots, \bxi_{p\bullet}) 
	=\e^{-\gamma \sum_{j=1}^p\bar{\sB}_{\alpha,\kappa+1}^{1,j-1}}\sbr{\prod_{k=1}^{K^+}\bar{\sB}_{\alpha,\kappa+1}^{m_{k},p-m_{k}} \lambda_0(\omega_k^*)},
	\end{equations}
where there are $K^+$ atoms $\omega_1^*, \cdots, \omega_{K^+}^*$ such that $m_k:=\sum_{j=1}^p\bxi_{j\bullet}(\omega_k^*)\ge1$ for  $k\in[K^+]$, and $\lambda_0$ denotes the density of $\Lambda_0$.
\end{proposition}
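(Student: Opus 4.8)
The plan is to deduce the formula from the hierarchical representation \eqref{eq:hierarchical} and \cref{prop:joint}. The crucial structural fact is that the Lévy intensity $\Lambda_{\CIBP(\gamma,\alpha,\kappa)}$ is a \emph{finite} measure: integrating out its two coordinates gives total mass $\frac{\gamma}{\sB(\alpha,\kappa+1)}\int_0^1 q^{\alpha-1}(1-q)^\kappa\,\d q=\gamma$, which is finite because $\alpha>0$. Hence the Poisson process with mean $\Lambda_{\CIBP(\gamma,\alpha,\kappa)}$ has only finitely many points almost surely, so a draw $\mu\sim\CRM(\Lambda_{\CIBP(\gamma,\alpha,\kappa)})$ can be listed as $\mu=\sum_{k=1}^{K}q_k\delta_{\omega_k}$ with $K\sim\pois(\gamma)$ and, given $K$, with $(q_k,\omega_k)$ i.i.d.\ from the normalised intensity, i.e.\ $q_k\iidsim\BETA(\alpha,\kappa+1)$ and $\omega_k\iidsim\Lambda_0$ independently. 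Sampling $\bxi_{j\bullet}\mid\mu\iidsim\BeP(\mu)$ then gives $\bxi_{j\bullet}=\sum_{k=1}^{K}\xi_{jk}\delta_{\omega_k}$ with $\xi_{jk}\mid q_k\indsim\Ber(q_k)$. In other words, \eqref{eq:rm} is exactly the hierarchical model \eqref{eq:hierarchical} (with $\theta_k=q_k$) decorated with i.i.d.\ atom locations $\omega_k\sim\Lambda_0$ that are independent of the $\xi_{jk}$'s; smoothness of $\Lambda_0$ guarantees the $\omega_k$ are almost surely distinct.

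Given this, I would finish in one of two equivalent ways. The quickest is to transport \cref{prop:joint}: the law of $(\bxi_{1\bullet},\dots,\bxi_{p\bullet})$ is the law of $[\bXi]$ from \eqref{eq:hierarchical} together with independent $\Lambda_0$-locations attached to the $K^+$ nonzero columns, so its density is obtained from \eqref{eq:joint} by trading the combinatorial factor $1/\prod_{\u}K_\u!$ for the product of location densities $\prod_{k=1}^{K^+}\lambda_0(\omega_k^*)$, which yields the stated formula (up to the choice of dominating measure for the atom locations). Alternatively, one can marginalise $\mu$ directly: split the $K$ atoms into the $K^+$ ones selected by at least one of the $p$ customers—these carry binary columns $\u_k\in\Delta_1$ with $m_k=\sum_j(\u_k)_j$—and the rest. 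By Poisson thinning, the unselected atoms form an independent $\pois\!\big(\gamma\,\bar{\sB}_{\alpha,\kappa+1}^{0,p}\big)$ collection whose only effect on the marginal law is through the normalising constant. For each retained atom, Beta--Bernoulli conjugacy gives the per-atom weight $\int_0^1 q^{m_k}(1-q)^{p-m_k}\,\BETA(\alpha,\kappa+1)(\d q)=\bar{\sB}_{\alpha,\kappa+1}^{m_k,p-m_k}$, the location contributes $\lambda_0(\omega_k^*)$, and the Poisson superposition rate $\gamma$ supplies the remaining constants; rewriting the resulting exponent via the identity $\sum_{j=1}^p\sB(\alpha+1,\kappa+j)=\sB(\alpha,\kappa+1)-\sB(\alpha,\kappa+p+1)$ (already used in the proof of \cref{prop:joint}) turns it into $-\gamma\sum_{j=1}^p\bar{\sB}_{\alpha,\kappa+1}^{1,j-1}$. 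The cleanest rigorous version of this second route bypasses the explicit atom listing and applies Campbell's theorem / the Lévy--Khintchine form of the Laplace functional of $\mu$ to $\prod_{j=1}^p\BeP(\bxi_{j\bullet}\mid\mu)$, producing the ``exponential of an integral against the Lévy measure'' structure in one stroke.

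I expect the only real difficulty to be bookkeeping rather than analysis. One has to fix precisely what ``$\P(\bxi_{1\bullet},\dots,\bxi_{p\bullet})$'' means—namely a Janossy-type density on configurations of finitely many marked atoms, with respect to the product of $\Lambda_0$ (for the locations) and counting measures (for the number of atoms and their binary columns)—and one must route the all-zero columns into the exponential normaliser only, so that the product over $k\in[K^+]$ runs exactly over the \emph{visible} atoms. Once the convention is pinned down, everything reduces to the same Beta--Bernoulli conjugacy and Poisson-superposition computation behind \cref{prop:joint}; indeed \cref{prop:joint} itself can be recovered from \cref{prop:rm} by integrating out the locations $\omega_k^*$ against $\Lambda_0$ and symmetrising over the $K_\u$ atoms that share each pattern $\u$.
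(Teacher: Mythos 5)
Your proof is correct, but it takes a genuinely different route from the paper's. The paper argues sequentially: it invokes the posterior conjugacy of completely random measures under Bernoulli process sampling (Theorem 3.3 of Kim, 1999) to characterize $\mu\mid\bxi_{1\bullet},\dots,\bxi_{p-1\bullet}$, reads off the one-step predictive law of $\bxi_{p\bullet}$ --- probability $(m_{p,k}+\alpha)/(p+\kappa+\alpha)$ for each existing atom and a $\pois\big(\gamma\bar{\sB}_{\alpha,\kappa+1}^{1,p-1}\big)$ number of new atoms --- and then chains these conditionals exactly as in the proof of \cref{prop:urn}. You instead exploit the observation that $\Lambda_{\CIBP(\gamma,\alpha,\kappa)}$ has finite total mass $\gamma$, so that $\mu$ is a $\pois(\gamma)$-sized collection of i.i.d.\ marked atoms with $q_k\iidsim\BETA(\alpha,\kappa+1)$ and $\omega_k\iidsim\Lambda_0$; this identifies \cref{def:crm} with \cref{def:hierarchical} (plus locations) in one stroke, after which the formula follows from Beta--Bernoulli conjugacy, Poisson thinning of the unselected atoms, and the telescoping identity \labelcref{eq:beta}, or simply by transporting \cref{prop:joint}. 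Your route is more elementary (no CRM posterior theorem needed), makes the equivalence of the two representations transparent, and explains directly why $K^+$ is finite; the paper's route, by contrast, recovers the restaurant-analogy conditionals \labelcref{eq:crm_prev_dish}--\labelcref{eq:crm_new_dish} as a byproduct (tying the random measure directly to \cref{def:restaurant}) and would survive in infinite-activity settings where the finiteness trick is unavailable. Two small points of bookkeeping: your concern about the dominating measure is well placed, and note that both your computation and the paper's own argument, carried to completion, produce a factor $\gamma^{K^+}$ (one $\gamma$ per visible atom from the Poisson intensity) that is absent from the displayed statement --- this is a normalization discrepancy in the proposition as written, not a gap in your argument.
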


The function $q \mapsto q^{\alpha-1}(1-q)^\kappa$ is integrable on $[0,1]$, which means that there would be a finite number of features.

\section{Application to Bayesian sparse factor models}
\label{sec:factor}

In this section, we consider an application of the CIBP prior distribution to Bayesian estimation of the factor model.

\subsection{Model and prior}

We consider  the following factor model where  a $p$-dimensional random vector $\Y$ is distributed as
   	\begin{equations}
   	\label{eq:model_fac}
	\Y|\Z=\z \sim \N_p(\B\z, \sigma^2\I), \quad \Z\sim \N_K(\zero, \I),
	\end{equations}
with $K<p$, $\B$ being a $p\times K$ factor loading matrix, $\Z$  a $K$-dimensional factor and $\sigma^2>0$  a noise variance. 

We consider the following prior on the loading matrix $\B$. Let $\beta_{jk}$ be the $(j,k)$-th entry of the $p\times \infty$-dimensional loading matrix $\B$. We impose the prior distribution based on the CIBP distribution such that
    \begin{align*}
    \beta_{jk}|\xi_{jk}  \indsim &(1-\xi_{jk})\delta_0+\xi_{jk} \N(0, \tau),
    \: j\in[p], \: k\in[K] \\
    \xi_{jk}|\theta_k \indsim &\Ber\del{\theta_k},
    \: j\in[p], \: k\in[K] \\
    \theta_k \iidsim &\BETA(\alpha, \kappa+1),\:  k\in[K]\\
    K\sim &\pois(\gamma)
    \end{align*}
where $\kappa\ge 0$ and $\tau>0$. That is, we impose $\CIBP(\gamma,\alpha,\kappa)$ on the binary matrix $\bXi:=(\xi_{jk})_{j\in[p], k\in\bN}$. We refer to the above distribution on $\B$ as $\ssCIBP_{p}(\gamma, \alpha,\kappa,\tau)$, which is an abbreviation of {\it spike and slab CIBP}.

\subsection{Posterior computation}

We provide an Markov chain Monte Carlo (MCMC) algorithm for sampling from the posterior distribution under the $\ssCIBP_{p}(\gamma, \alpha,\kappa,\tau)$ prior on $\B$ and inverse Gamma prior $\IG(a,b)$ on $\sigma^2$. Let $K^+$ be the number of nonzero columns of the loading matrix $\B$. The MCMC algorithm is as follows:

\paragraph{Sample  $\beta_{jk}$ for $j\in[p]$ and $k\in[K^+]$.}
The factor loading $\beta_{jk}$ is sampled from the conditional posterior
    \begin{equation*}
        \beta_{jk}|- \sim
        \begin{cases}
        \N(\hbeta_{jk}, \htau_{k}) & \mbox{if }\xi_{jk}=1\\
        \delta_0 & \mbox{if }\xi_{jk}=0,
        \end{cases}
    \end{equation*}
where
    \begin{align*}
    \htau_{k}&:=\del{\sigma^{-2}\sum_{i=1}^nZ_{ik}^2+\tau^{-1}}^{-1}\\
    \hbeta_{jk}&:=\htau_{k}\cbr{\sigma^{-2}\sum_{i=1}^nZ_{ik}\del{Y_{ij}-\sum_{h\in[K^+]:h\neq k}Z_{ih}\beta_{jh}}}.
    \end{align*}

\paragraph{Sample  $\xi_{jk}$ for $j\in[p]$ and $k\in\bN$.}
When we sample $(\xi_{jk}:k\in\bN)$, we use the fact that the CIBP is exchangeable to assume that the $j$-th customer is the last customer to enter the restaurant. Therefore, for each $k\in[K^*]$, $\xi_{jk}$ is sampled with probability
    \begin{equation*}
        \frac{\Pi(\xi_{jk}=1|-)}{\Pi(\xi_{jk}=0|-)}
        = \frac{m_{j,k}+\alpha}{\kappa+p-m_{j,k}}\sqrt{\frac{\htau_{k}}{\tau}}\exp\del{\frac{1}{2\htau_{k}}\hbeta_{jk}^2},
    \end{equation*}
where $m_{j,k}:=\sum_{l\in[p]:l\neq j}\xi_{lk}$. We then sample $\xi_{jk}$ for each of the infinitely many all-zero columns. To do this, we use the Metropolis–Hastings (MH) steps as follows. We propose $K_j^*\in\bN\cup\{0\}$ and $\bbeta_j^{*}:=(\beta_{j,1}^{*},\dots, \beta_{j,K_j^*}^{*})\in\R^{K_j^{*}}$ from the proposal distribution 
    \begin{equation*}
        J(K_j^*)J(\beta_j^{*}|K_j^*)=\pois(1)\N(0,\tau)^{K_j^*}.%\sB(2, \kappa+p)/\sB(1,\kappa+1) \gamma\bar{\sB}^{1,p-1}_{\alpha,\kappa+1}
    \end{equation*}
%where the Poisson part of the right-hand side is equal to the prior distribution of the number of new dishes taken by the $p$-th customer and the normal part is equal to the prior distribution of the $K_j^*$ nonzero loadings. 
Then we accept the proposal with probability
    \begin{equation*}
        \min\cbr{1,\abs[0]{\M_j}^{-n/2}\exp\del{\frac{1}{2}(\bbeta_j^{*})^\top\M_j^{-1}\bbeta_j^{*}\sum_{i=1}^nE_{ij}^2}\del{\gamma\bar{\sB}^{1,p-1}_{\alpha,\kappa+1}}^{K_j^*}},
    \end{equation*}
where
    \begin{align*}
        \M_j&:=\sigma^{-2}\bbeta_j^{*}(\bbeta_j^{*})^\top+\I\\
        E_{ij}&:=\sigma^{-2}\del{Y_{ij}-\sum_{k=1}^{K^+}Z_{ik}\beta_{jk}}.
    \end{align*}
If the proposal is accepted, we update 
    \begin{align*}
        \B&\leftarrow (\B, (\beta_{j,k}^{*}\ind(l=j))_{l\in[p], k\in[K_j^*]})\\
        K^+&\leftarrow K^++K_j^*.
    \end{align*}

\paragraph{Sample $\Z_i$ for $i\in[n]$.} The latent variable $\Z_i$ is sampled from
    \begin{equation*}
        \Z_i|-\sim \N\del{ \sigma^{-2}\hbSigma_{\Z}\B^\top\Y_i, \hbSigma_{\Z}}
    \end{equation*}
where $\hbSigma_{\Z}:=( \sigma^{-2}\B^\top\B+\I)^{-1}$.
 
\paragraph{Sample $\sigma^2$.} The noise variance $\sigma^2$ is sampled from
    \begin{equation*}
         \sigma^2|-\sim \IG\del{a+ \frac{np}{2}, b+\frac{1}{2}\sum_{i=1}^n\sum_{j=1}^p\del{Y_{ij}-\sum_{k=1}^{K^+}Z_{ik}\beta_{jk}}^2}.
    \end{equation*}

\subsection{Simulation}

We conduct simulation to compare the CIBP and the two-parameter IBP  when they are used as prior distributions for the sparse factor model.

We generate simulated data sets as follows. For each value $p\in\{50, 100, 150,\dots, 300\}$, we generate a $p\times 4$-dimensional loading matrix $\B_0$ with the number of nonzero rows $10$. The loadings in the sampled nonzero rows are generated from the uniform distribution on $(-3,-2)\cup(2,3)$. Then we sample $n=50$ random vectors from the multivariate normal distribution with mean $\zero$ and variance $\B_0(\B_0)^\top+\I$ independently. We repeat this generating procedure 100 times.

For each synthetic data set, we compute the posterior distribution under the  $\CIBP(\gamma,\alpha,\kappa)$ and $\IBP(\omega,\kappa)$ prior, respectively. For the CIBP prior, we set $\gamma=1$, $\alpha=10$ and $\kappa=10$. For the IBP prior, we set $\omega=1$ which is equal to $\gamma\alpha/\kappa$. In \cref{fig:simul}, we present the posterior mean of the number of factors under the $\CIBP(\gamma,\alpha,\kappa)$ and $\IBP(\omega,\kappa)$ prior, respectively, for $p\in\{50, 100, 150,\dots, 300\}$ over 100 replications.  As the dimension $p$  increases, the $\IBP(\omega,\kappa)$ prior tends to more largely overestimate the number of factors. But the $\CIBP(\gamma,\alpha,\kappa)$ prior provides accurate estimates of the number of factors for all the values of $p.$

\begin{figure}
    \centering
    \includegraphics[scale=0.2]{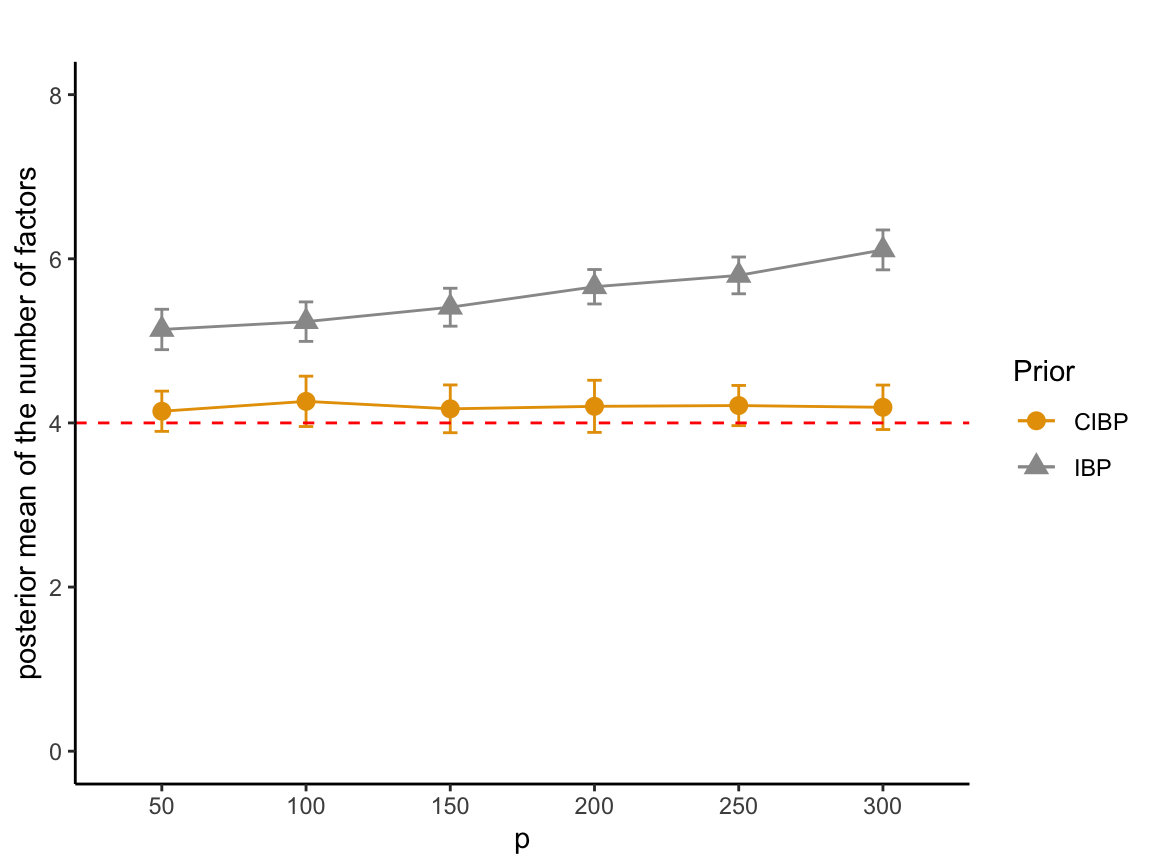}
    \caption{The posterior mean of the number of factors under the $\CIBP(\gamma,\alpha,\kappa)$ and $\IBP(\omega,\kappa)$ priors. The red dashed line indicates the true number of factors, 4.}
    \label{fig:simul}
\end{figure}

\section{Proofs for \cref{sec:construction}}
\label{sec:proofs}
%\subsection{Proofs of results in \cref{sec:equiv_rep}}
%\label{sec:CIBP_prf_rep}

\subsection{Proof of \cref{prop:joint}}

\begin{proof}
Recall that $m_{k}:=\sum_{j=1}^p\xi_{jk}$. If $K\ge K^+$,  we have that
	\begin{align*}
	\P(\bXi|K) &= \prod_{k=1}^K\frac{ \sB(m_{k}+\alpha, p-m_{k}+\kappa+1)}{\sB(\alpha, \kappa+1)}\\
	& =\del{\frac{\sB(\alpha, p+\kappa+1)}{\sB(\alpha, \kappa+1)}}^{K-K^+}
	\prod_{k=1}^{K^+}\frac{\sB(m_{k}+\alpha, p-m_{k}+\kappa+1)}{\sB(\alpha, \kappa+1)} \\
	& =\del{\bar\sB_{\alpha, \kappa+1}^{0,p}}^{K-K^+}\prod_{k=1}^{K^+}\bar\sB_{\alpha,\kappa+1}^{m_{k},p-m_{k}},
	\end{align*}
where the second equality follows from reordering the columns such that $m_{k}>0$ if $k\le K^+$ and $m_{k}=0$ otherwise. Recall that $\bar{\sB}_{a_1,b_1}^{a_2,b_2}=\sB(a_1+a_2, b_1+b_2)/\sB(a_1,b_1)$. Therefore, since the cardinality of the lof-equivalence class is $|[\bXi]]=K!/\prod_{\u\in \Delta}K_\u!$, the probability of a lof equivalence class of $\bXi$ given $K\ge K^+$ is given by
    \begin{equation*}
        \P([\bXi]|K) = \frac{K!}{\prod_{\u\in \Delta}K_\u!}\del{\bar\sB_{\alpha, \kappa+1}^{0,p}}^{K-K^+}
        \prod_{k=1}^{K^+}\bar\sB_{\alpha,\kappa+1}^{m_{k},p-m_{k}}
    \end{equation*}
If $K< K^+$, it is clear that $\P(\bXi|K)=0$. 

Let $p_K$ the probability mass function of $\pois(\gamma)$, i.e.,  $p_K(k):=\e^{-\gamma}\gamma^k/k!$ for $k\in\bN$. Marginalizing over $K$, we have that
	\begin{align*}
	\P([\bXi]) =\frac{1}{\prod_{\u\in \Delta_1}K_\u!}\sbr{\prod_{k=1}^{K^+}\bar\sB_{\alpha,\kappa+1}^{m_{k},p-m_{k}}}
\sum_{K= K^+}^{\infty}\frac{K!}{K_\zero!}\del{\bar{\sB}_{\alpha, \kappa+1}^{0,p}}^{K-K^+}p_K(K).
	\end{align*}
The summation term of the preceding display can be written as
	\begin{equations}
	\sum_{K= K_+}^{\infty}\frac{K!}{K_\zero!}\del{\bar{\sB}_{\alpha, \kappa+1}^{0,p}}^{K-K^+}p_K(k)
    &=\e^{-\gamma}\gamma^{K^+}\sum_{K= K_+}^{\infty}\frac{1}{(K-K^+)!}\del{\gamma \bar{\sB}_{\alpha, \kappa+1}^{0,p}}^{K-K^+}  \\
    &=\gamma ^{K^+}\e^{-\gamma\del{1-\bar{\sB}_{\alpha, \kappa+1}^{0,p}}},
	\label{eq:large}
	\end{equations}
where we use the identity $\e^x=\sum_{k=0}^\infty x^k/k!$ for the second inequality. 
Lastly, from the identity $\sB(x,y)-\sB(x, y+1)=\sB(x+1,y)$, it follows that  
	\begin{equations}
	1-\bar{\sB}_{\alpha, \kappa+1}^{0,p}
	&=1-\frac{\sB(\alpha,p+\kappa+1)}{\sB(\alpha, \kappa+1)} \\
	&=  \frac{1}{\sB(\alpha, \kappa+1)} \cbr{\sB(\alpha, \kappa+1)-\sB(\alpha,p+\kappa+1)}  \\
	&= \frac{1}{\sB(\alpha, \kappa+1)}\sum_{j=1}^p\cbr{\sB(\alpha, \kappa+j)-\sB(\alpha,\kappa+j+1)} \\
	&= \frac{1}{\sB(\alpha, \kappa+1)}\sum_{j=1}^p\sB(\alpha+1,\kappa+j) \\
	&=\sum_{j=1}^p\bar\sB_{\alpha, \kappa+1}^{1, j-1}.
\label{eq:beta}
	\end{equations}
Combining \labelcref{eq:large} and \labelcref{eq:beta}, we get the desired result.
\end{proof}

\subsection{Proof of \cref{prop:urn}}

\begin{proof}
The proof is by induction. Let $\bxi_{j\bullet}$ be the $j$-th row of $\bXi$.  For $p=1$, from a Poisson likelihood, we have 
    \begin{equation*}
        \P(\bxi_{1\bullet}) = \frac{1}{K_1^+!}\del{\gamma\bar{\sB}_{\alpha, \kappa+1}^{1,0}}^{K_1^+}\e^{-\gamma\bar{\sB}_{\alpha, \kappa+1}^{1,0}}
    \end{equation*}
where $K_1^+$ is a number of nonzero elements of $\bxi_{1\bullet}$. It is same as \labelcref{eq:joint} with $p=1$ and $K^+=K_1^+$. 

For $p\ge 2$, consider the conditional distribution of $\bxi_{p\bullet}$ given $\bxi_{1\bullet},\dots, \bxi_{p-1\bullet}$, which is given by
    \begin{equations}
    \label{eq:conditional}
        \P(\bxi_{p\bullet}|\bxi_{1\bullet},\dots, \bxi_{p-1\bullet})
        &=\e^{-\gamma\bar{\sB}_{\alpha, \kappa+1}^{1,p-1}}
        \frac{(\gamma\bar{\sB}_{\alpha,\kappa+1}^{1,p-1})^{\Knew_p}}{\Knew_p!}\\
        &\quad \times \prod_{k\in J_{p}}\frac{m_{p,k}+\alpha}{p+\kappa + \alpha}\prod_{k\notin J_{p}}\frac{p-m_{p,k}+\kappa}{p+\kappa+\alpha},
    \end{equations}
where $m_{p,k}:=\sum_{j=1}^{p-1}\xi_{jk}$, $\Knew_p$ is the number of new features sampled by the $p$-th customer and $J_p$ is the set of dishes taken by the $p$-th customer, i.e., $J_p:=\cbr{k\in[K_{p-1}^+]:\xi_{pk}=1}$. Let $K_p^+:=\sum_{j=1}^p\Knew_j=K_{p-1}^++\Knew_p$ and $\Knew_1=K_1^+$.
By the inductive hypothesis, we have
    \begin{align*}
    \P(\bxi_{1\bullet},\dots, \bxi_{p\bullet}) 
    & =\P(\bxi_{p\bullet}|\bxi_{1\bullet},\dots, \bxi_{p-1\bullet})\P(\bxi_{1\bullet},\dots, \bxi_{p-1\bullet})\\
    & =\e^{-\gamma\sum_{j=1}^p \bar{\sB}_{\alpha,\kappa+1}^{0,j-1}}\frac{\gamma^{K_p^+}}{\prod_{j=1}^p \Knew_j!}
    \prod_{k\in J_{p}}\frac{m_{p,k}+\alpha}{p+\kappa+\alpha}\bar{\sB}_{\alpha,\kappa+1}^{m_{p,k},p-1-m_{p,k}}  \\
    &\quad \times \prod_{k\notin J_p}\frac{p-m_{p,k}}{p+\kappa+\alpha}\bar{\sB}_{\alpha,\kappa+1}^{m_{p,k},p-1-m_{p,k}}  \times \del{\bar{\sB}_{\alpha, \kappa+1}^{1,p-1}}^{\Knew_p}
\end{align*}
Since $m_k=m_{p,k}+1$ for $k\in J_p$ and $m_{k}=m_{p,k}$ otherwise, we have 
    \begin{align*}
        \frac{m_{p,k}+\alpha}{p+\kappa+\alpha}\bar{\sB}_{\alpha,\kappa+1}^{m_{p,k},p-1-m_{p,k}}
        &=\frac{m_{p,k}+\alpha}{p+\kappa+\alpha}\frac{\sB(m_{p,k}+\alpha, p -m_{p,k}+\kappa)}{\sB(\alpha,\kappa+1)}\\
        &	=\frac{\sB(m_{p,k}+1+\alpha, p-m_{p,k}+\kappa)}{\sB(\alpha,\kappa+1)}\\
        &	=\bar{\sB}_{\alpha,\kappa+1}^{m_{p,k}+1,p-1-m_{p,k}}\\
        &	=\bar{\sB}_{\alpha,\kappa+1}^{m_{k},p-m_{k}}
    \end{align*}
and similarly, 
    \begin{align*}
        \frac{p-m_{p,k}}{p+\kappa+\alpha}\bar{\sB}_{\alpha,\kappa+1}^{m_{p,k},p-1-m_{p,k}}
        =\bar{\sB}_{\alpha, \kappa+1}^{m_{p,k},p-m_{p,k}}=\bar{\sB}_{\alpha, \kappa+1}^{m_{k},p-m_{k}}.
    \end{align*}
Therefore, 
    \begin{equations}
    \P(\bxi_{1\bullet},\dots, \bxi_{p\bullet})
    &= \e^{-\gamma\sum_{j=1}^p \bar{\sB}_{\alpha,\kappa+1}^{1,j-1}}\frac{\gamma^{K_p^+}}{\prod_{j=1}^p \Knew_j!}
	    \prod_{k\in J_{p}}\bar{\sB}_{\alpha,\kappa+1}^{m_{k},p-m_{k}}\\
	 &\quad \times  \prod_{k\notin J_p} \bar{\sB}_{\alpha, \kappa+1}^{m_{k},p-m_{k}}
	   \times\del{\bar{\sB}_{\alpha, \kappa+1}^{1,p-1}}^{\Knew_p} \\
   & =\e^{-\gamma\sum_{j=1}^p \bar{\sB}_{\alpha,\kappa+1}^{1,j-1}}\frac{\gamma^{K_p^+}}{\prod_{j=1}^p \Knew_j!}
	    \prod_{k=1}^{K_p^+}\bar{\sB}_{\alpha, \kappa+1}^{m_{k},p-m_{k}}. 
    \label{eq:induction}
    \end{equations}
Note that $\prod_{j=1}^p \Knew_j!/\prod_{\u\in\Delta_1} K_\u$ matrices generated by the above process have the same left-ordered form, hence $\P([\bXi])$ is obtained by multiplying $ \P(\bxi_{1\bullet},\dots, \bxi_{p\bullet})$ in \labelcref{eq:induction} by this quantity.
\end{proof}

\subsection{Proof of \cref{prop:rm}}

\begin{proof}
By the well-known conjugacy result (Theorem 3.3 of \citet{kim1999nonparametric}), 
    \begin{equation*}
        \mu|\bxi_{1\bullet},\dots, \bxi_{p-1\bullet} \sim \CRM(\Lambda_p, \{\omega_k^*,P_{k}\}_{k=1}^K),
    \end{equation*}
where $\omega_1^*,\dots, \omega_K^*$ are unique atoms that $\bxi_{1\bullet},\dots, \bxi_{p-1\bullet}$ possess,
	\begin{align*}
	P_k(\d q)&:=\frac{q^{m_{p,k}+\alpha-1}(1-q)^{p-1-m_{p,k}+\kappa}\d q}{\int_{(0,1]}q^{m_{p,k}+\alpha-1}(1-q)^{p-1-m_{p,k}+\kappa}\d q} \\
	&= \frac{1}{\sB(m_{p,k}+\alpha,p-m_{p,k}+\kappa)}q^{m_{p,k}+\alpha-1}(1-q)^{p-1-m_{p,k}+\kappa}\d q,
	\end{align*}
with $m_{p,k}:=\sum_{j=1}^{p-1}\bxi_{j\bullet}(\omega_k^*)$, and
    \begin{equation*}
        \Lambda_p(\d q, \d \omega):=\frac{\gamma}{\sB(\alpha,\kappa+1)} q^{\alpha-1}(1-q)^{p-1+\kappa} \d q \Lambda_0(\d\omega).
    \end{equation*}
Thus, for each atom $\omega_k^*$, we have that
	\begin{equations}
	\label{eq:crm_prev_dish}
	&\P\del{\bxi_{p\bullet}(\omega_k^*)=1|\bxi_{1\bullet},\dots, \bxi_{p-1\bullet}} \\
	&= \frac{1}{\sB(m_{p,k}+\alpha,p-m_{p,k}+\kappa)}\int_{(0,1]}q^{m_{p,k}+\alpha}(1-q)^{p-1-m_{p,k}+\kappa}\d q\\
	&= \frac{1}{\sB(m_{p-1,k}+\alpha,p-m_{p-1,k}+\kappa)}{\sB(m_{p,k}+1+\alpha, p-m_{p,k}+\kappa)} \\
	&= \frac{m_{p,k}+\alpha}{p+\kappa+\alpha}.
	\end{equations}
On the other hand, for a small neighborhood $\d \omega$ around $\omega\in \Omega\setminus\{\omega_1^*, \dots,\omega_K^*\}$, we have 
	\begin{equations}
	\label{eq:crm_new_dish}
	&\P\del{\bxi_{p\bullet}(\d\omega)=1|\bxi_{1\bullet},\dots, \bxi_{p-1\bullet}}\\
	&=\E\sbr{\bxi_{p\bullet}(\d\omega)|\bxi_{1\bullet},\dots, \bxi_{p-1\bullet}}	\\
	&=\frac{\gamma}{\sB(\alpha,\kappa+1)}\int_{(0,1]}q^\alpha(1-q)^{p-1+\kappa} \d q \Lambda_0(\d\omega) \\
	&=\gamma\frac{ \sB(\alpha+1, p+\kappa)}{\sB(\alpha,\kappa+1)} \Lambda_0(\d\omega) \\
	&=\gamma\bar\sB_{\alpha,\kappa+1}^{1,p-1}\Lambda_0(\d\omega)
	\end{equations}
This implies that on $\Omega\setminus\{\omega_1^*,\dots, \omega_K^*\}$, $\bxi_{p\bullet}$ is a Poisson process with intensity measure $\gamma\bar\sB_{\alpha,\kappa+1}^{1,p-1}\Lambda_0$, since $\bxi_{p\bullet}$ is completely random and $\Lambda_0$ is smooth. Thus, the number of new atoms in $\bxi_{p\bullet}$ follows Poisson distribution with rate  $\gamma\bar\sB_{\alpha,\kappa+1}^{1,p-1}$. Combining \labelcref{eq:crm_prev_dish} and \labelcref{eq:crm_new_dish}, we complete the proof.
\end{proof}

\section{Conclusion}
\label{sec:conclusion}

In this paper, we proposed the CIBP, a new  three-parameter generalization of the IBP. The most notable property of the CIBP compared with the standard IBPs is that the distribution of the number of features converges to a fixed distribution as the number of objects grows. Due to this property, the  CIBP prevents features that are unnecessary in interpretation and/or prediction.  The proposed CIBP is assessed on  a high-dimensional sparse factor model. We provided empirical evidence showing that the CIBP is better than the IBP in estimation of the number of factors.

\subsection*{Acknowledgement}
This work was supported by INHA UNIVERSITY Research Grant.

\bibliographystyle{apalike}
\bibliography{_references}

\begin{thebibliography}{}

\bibitem[Caron, 2012]{caron2012bayesian}
Caron, F. (2012).
\newblock Bayesian nonparametric models for bipartite graphs.
\newblock In {\em Advances in Neural Information Processing Systems}, pages
  2051--2059.

\bibitem[Griffiths and Ghahramani, 2005]{griffiths2005infinite}
Griffiths, T.~L. and Ghahramani, Z. (2005).
\newblock Infinite latent feature models and the indian buffet process.
\newblock In {\em Proceedings of the 18th International Conference on Neural
  Information Processing Systems}, pages 475--482.

\bibitem[Kim et~al., 1999]{kim1999nonparametric}
Kim, Y. et~al. (1999).
\newblock Nonparametric bayesian estimators for counting processes.
\newblock {\em The Annals of Statistics}, 27(2):562--588.

\bibitem[Meeds et~al., 2007]{meeds2007modeling}
Meeds, E., Ghahramani, Z., Neal, R.~M., and Roweis, S.~T. (2007).
\newblock Modeling dyadic data with binary latent factors.
\newblock In {\em Advances in Neural Information Processing Systems}, pages
  977--984.

\bibitem[Miller et~al., 2009]{miller2009nonparametric}
Miller, K., Jordan, M.~I., and Griffiths, T.~L. (2009).
\newblock Nonparametric latent feature models for link prediction.
\newblock In {\em Advances in Neural Information Processing Systems}, pages
  1276--1284.

\bibitem[Navarro and Griffiths, 2008]{navarro2008latent}
Navarro, D.~J. and Griffiths, T.~L. (2008).
\newblock Latent features in similarity judgments: {A} nonparametric bayesian
  approach.
\newblock {\em Neural computation}, 20(11):2597--2628.

\bibitem[Onatski, 2010]{onatski2010determining}
Onatski, A. (2010).
\newblock Determining the number of factors from empirical distribution of
  eigenvalues.
\newblock {\em The Review of Economics and Statistics}, 92(4):1004--1016.

\bibitem[Teh and Gorur, 2009]{teh2009indian}
Teh, Y.~W. and Gorur, D. (2009).
\newblock Indian buffet processes with power-law behavior.
\newblock In {\em Advances in Neural Information Processing Systems}, pages
  1838--1846.

\bibitem[Thibaux and Jordan, 2007]{thibaux2007hierarchical}
Thibaux, R. and Jordan, M.~I. (2007).
\newblock Hierarchical beta processes and the indian buffet process.
\newblock In {\em Artificial Intelligence and Statistics}, pages 564--571.

\end{thebibliography}
\end{document}